\documentclass{llncs} 
\usepackage{llncsdoc}
\usepackage{aliascnt}
\usepackage{remreset}
\usepackage{caption}
\usepackage{enumerate}
\usepackage{algorithm}
\usepackage[noend]{algpseudocode}
\usepackage{amsmath}
\usepackage{bm} 
\usepackage{comment}
\usepackage{amssymb}

\usepackage{amsthm}
\usepackage{atbegshi}

\usepackage[pdftex]{graphicx}
\DeclareMathOperator*{\argmin}{argmin}

\newcommand{\diag}{\mathop{\mathrm{diag}}}
\algnewcommand{\parState}[1]{\State%
  \parbox[t]{\dimexpr\linewidth-\algmargin}{\strut #1\strut}}
\graphicspath{ {./images/} }
%

\title{Scoring and Classifying with Gated Auto-encoders}
\titlerunning{Scoring and Classifying with Gated Auto-encoders}

\author{
    Daniel Jiwoong Im, and Graham W. Taylor}
\institute{
School of Engineering\\
University of Guelph\\
Guelph, On, Canada\\
\email{\{imj,gwtaylor\}@uoguelph.ca}}

\begin{document}

\maketitle

\begin{abstract}
  Auto-encoders are perhaps the best-known non-probabilistic methods
  for representation learning. They are conceptually simple and easy
  to train. Recent theoretical work has shed light on their ability to
  capture manifold structure, and drawn connections to density
  modeling. This has motivated researchers to seek ways of
  auto-encoder scoring, which has furthered their use in
  classification. Gated auto-encoders (GAEs) are an interesting and
  flexible extension of auto-encoders which can learn transformations
  among different images or pixel covariances within images. However,
  they have been much less studied, theoretically or empirically. In
  this work, we apply a dynamical systems view to GAEs, deriving a
  scoring function, and drawing connections to Restricted Boltzmann
  Machines. On a set of deep learning benchmarks, we also demonstrate
  their effectiveness for single and multi-label classification.\looseness=-1
\end{abstract}

\section{Introduction}

Representation learning algorithms are machine learning algorithms
which involve the learning of features or explanatory
factors. Deep learning techniques, which employ several layers of
representation learning, have achieved much recent success in machine
learning benchmarks and competitions, however, most of these successes
have been achieved with purely supervised learning methods and have
relied on large amounts of labeled data
\cite{krizhevsky2012imagenet,szegedy2014going}. Though progress has been slower, it is likely
that unsupervised learning will be important to future advances in
deep learning \cite{bengio2013deep}. 

The most successful and well-known example of non-probabilistic
unsupervised learning is the auto-encoder. Conceptually simple and
easy to train via backpropagation, various regularized variants of the
model have recently been proposed \cite{Rifai2011,Vincent2008,Swersky2011} 
as well as theoretical insights into their operation
\cite{Alain2013,Vincent2010}.

In practice, the latent representation learned by auto-encoders has
typically been used to solve a secondary problem, often
classification. The most common setup is to train a single
auto-encoder on data from all classes and then a classifier is tasked
to discriminate among classes.  However, this contrasts with the way
probabilistic models have typically been used in the past: in that
literature, it is more common to train one model per class and use
Bayes' rule for classification. There are two challenges to
classifying using per-class auto-encoders. First, up until very
recently, it was not known how to obtain the score of data under an
auto-encoder, meaning how much the model ``likes'' an input. Second,
auto-encoders are non-probabilistic, so even if they can be scored,
the scores do not integrate to 1 and therefore the per-class models
need to be calibrated.


Kamyshanska and Memisevic have recently shown
how scores can be computed from an auto-encoder by interpreting it as
a dynamical system \cite{Kamyshanska2013}.  Although the scores do not integrate to 1, they
show how one can combine the unnormalized scores into a generative
classifier by learning class-specific normalizing constants from
labeled data.

In this paper we turn our interest towards a variant of auto-encoders
which are capable of learning higher-order features from data
\cite{Memisevic2011}. The main idea is to learn relations between
pixel intensities rather than the pixel intensities themselves by
structuring the model as a tri-partite graph which connects hidden
units to pairs of images.  If the images are different, the hidden
units learn how the images transform. If the images are the same, the
hidden units encode within-image pixel covariances. Learning such
higher-order features can yield improved results on recognition and
generative tasks.\looseness=-1

We adopt a dynamical systems view of gated auto-encoders,
demonstrating that they can be scored similarly to the classical
auto-encoder. We adopt the framework of \cite{Kamyshanska2013} both
conceptually and formally in developing a theory which yields insights into the
operation of gated auto-encoders. In addition to the theory,
we show in our experiments
that a classification model based on gated auto-encoder scoring can
outperform a number of other representation learning architectures,
including classical auto-encoder scoring. We also demonstrate that
scoring can be useful for the structured output task of multi-label classification.

\section{Gated Auto-encoders}
In this section, we review the gated
auto-encoder (GAE). Due to space constraints, we will not review the
classical auto-encoder. Instead, we direct the reader to the reviews in
\cite{Memisevic2011,Kamyshanska2014} with which we share
notation. Similar to the classical auto-encoder, the GAE
consists 
of an encoder $h(\cdot)$ and decoder $r(\cdot)$. While the standard
auto-encoder processes a datapoint $\mathbf{x}$, the GAE processes input-output
pairs $(\mathbf{x}, \mathbf{y})$.  The GAE is usually trained to
reconstruct $\mathbf{y}$ given $\mathbf{x}$, though it can also be
trained symmetrically, that is, to reconstruct both $\mathbf{y}$ from
$\mathbf{x}$ and $\mathbf{x}$ from
$\mathbf{y}$. Intuitively, the GAE
learns \emph{relations} between the inputs, rather than
representations of the inputs themselves\footnote{Relational features
  can be mixed with standard features by simply adding connections
  that are not gated.}. If $\mathbf{x} \neq
\mathbf{y}$, for example, they represent sequential frames of a video, intuitively,
the mapping units $\mathbf{h}$ learn \emph{transformations}. In the
case that $\mathbf{x} = \mathbf{y}$ (i.e.~the input is copied), the
mapping units learn pixel covariances.

In the simplest form of the GAE, the $M$ hidden (mapping) units are
given by a basis expansion of $\mathbf{x}$ and $\mathbf{y}$. However,
this leads to a parameterization that it is at least quadratic in the
number of inputs and thus, prohibitively large. Therefore, in
practice, $\mathbf{x}$, $\mathbf{y}$, and $\mathbf{h}$ are projected
onto matrices or (``latent factors''), $W^X$, $W^Y$, and $W^H$,
respectively. 
The number of factors, $F$, must be the same for $X$, $Y$, and
$H$. Thus, the model is completely parameterized by
$\theta = \lbrace W^X, W^Y, W^H \rbrace$ such that 
$W^X$ and $W^Y$ are $F \times D$ matrices (assuming both $\mathbf{x}$
and $\mathbf{y}$ are $D$-dimensional) and $W^H$ is an $M \times F$ matrix.
The encoder function is defined by
\begin{equation}
    h(\mathbf{x},\mathbf{y}) = \sigma(W^H((W^X\mathbf{x}) \odot
    (W^Y\mathbf{y}))) \label{eqn:enc}
\end{equation}
where $\odot$ is element-wise multiplication and 
$\sigma(\cdot)$ is an activation function. The decoder function is defined by
\begin{align}
    r(\mathbf{y}|\mathbf{x}, h) = (W^Y)^T( (W^X\mathbf{x}) \odot
    (W^H)^Th(\mathbf{x},\mathbf{y})) .\ \label{eqn:dec1}\\
    r(\mathbf{x}|\mathbf{y}, h) = (W^X)^T( (W^Y\mathbf{y}) \odot
    (W^H)^Th(\mathbf{x},\mathbf{y}))  ,\ \label{eqn:dec2} 
\end{align}
%
Note that the parameters are usually shared between the encoder and decoder.
The choice of whether
to apply a nonlinearity to the output, and the specific form of
objective function will depend on the nature of the inputs, for
example, binary, categorical, or real-valued. Here, we have assumed
real-valued inputs for simplicity of presentation, therefore,
Eqs.~\ref{eqn:dec1} and \ref{eqn:dec2} are bi-linear functions of $\mathbf{h}$
and we use a squared-error objective:
\begin{equation}
    J = \frac{1}{2} \|r(\mathbf{y}|\mathbf{x}) -\mathbf{y}\|^2.
\end{equation}
We can also constrain the GAE to be a symmetric model by 
 training it to reconstruct both $\mathbf{x}$ given $\mathbf{y}$ and 
$\mathbf{y}$ given $\mathbf{x}$ \cite{Memisevic2011}:
\begin{equation}
    J = \frac{1}{2}  \|r(\mathbf{y}|\mathbf{x}) -\mathbf{y}\|^2 +
    \frac{1}{2} \|r(\mathbf{x}|\mathbf{y}) -\mathbf{x} \|^2 .\
    \label{symobj}
\end{equation}

The symmetric objective can be thought of as the non-probabilistic
analogue of modeling a \emph{joint} distribution over $\mathbf{x}$ and
$\mathbf{y}$ as opposed to a conditional \cite{Memisevic2011}.

\section{Gated Auto-Encoder Scoring} 

In \cite{Kamyshanska2013}, the authors showed that data could be
scored under an auto-encoder by interpreting the model as a
\emph{dynamical system}. In contrast to the probabilistic views based
on score matching \cite{Swersky2011,Vincent2010,Alain2013} and
regularization, the dynamical systems approach permits scoring under
models with either linear (real-valued data) or sigmoid (binary data)
outputs, as well as arbitrary hidden unit activation functions. The
method is also agnostic to the learning procedure used to train the
model, meaning that it is suitable for the various types of
regularized auto-encoders which have been proposed recently. In this
section, we demonstrate how the dynamical systems view can be extended
to the GAE.

\subsection{Vector field representation} \label{vfr}
Similar to \cite{Kamyshanska2013}, we will view the GAE as a dynamical system
with the vector field defined by
\begin{align*}
    F(\mathbf{y}|\mathbf{x}) = r(\mathbf{y}|\mathbf{x}) -\mathbf{y} .\
\end{align*}
The vector field represents the local transformation that
$\mathbf{y}|\mathbf{x}$ undergoes as a result of applying the
reconstruction function $r(\mathbf{y}|\mathbf{x})$. Repeatedly
applying the reconstruction function to an input
$\mathbf{y}|\mathbf{x} \rightarrow r(\mathbf{y}|\mathbf{x})
\rightarrow r(r(\mathbf{y}|\mathbf{x})|\mathbf{x}) \rightarrow \cdots \rightarrow$
$r(r\cdots r(\mathbf{y}|\mathbf{x})|\mathbf{x})$ yields a trajectory whose dynamics, from
a physics perspective, can be
viewed as a force field.
At any point, the potential force acting on a point is the gradient
of some potential energy (negative goodness) at that point. In this
light, the GAE reconstruction may be viewed as pushing pairs of inputs
$\mathbf{x,y}$ in the direction of lower energy.


Our goal is to derive the energy function, which we call a scoring
function, and which measures how much a GAE ``likes'' a given
pair of inputs $(\mathbf{x},\mathbf{y})$ up to normalizing constant.
In order to find an expression for the potential energy, the vector
field must be able to be written as the derivative of a scalar field
\cite{Kamyshanska2013}.  To check this, we can submit to Poincar\'e's
integrability criterion: For some open, simple connected set~~$\mathcal{U}$, a continuously differentiable function $F:\mathcal{U}
\rightarrow \Re^m$ defines a gradient field if and only if
\begin{align}
    \frac{\partial F_i(\mathbf{y})}{\partial y_j} = 
    \frac{\partial F_j(\mathbf{y})}{\partial y_i}, \text{ } \forall i,j=1 \cdots n.\nonumber
\end{align}
The vector field defined
by the GAE indeed satisfies Poincar\'e's integrability criterion;
therefore it can be written as the derivative of a scalar field. A
derivation is given in the Appendix~\ref{App:AppendixA.1}
This also applies to the GAE with a symmetric objective function (Eq.~\ref{symobj})
by setting the input as $\boldsymbol{\xi}|\boldsymbol{\gamma}$ such that 
$\boldsymbol{\xi}=[\mathbf{y};\mathbf{x}]$ and $\boldsymbol{\gamma}=[\mathbf{x};\mathbf{y}]$ and
following the exact same procedure.

\subsection{Scoring the GAE}

As mentioned in Section \ref{vfr}, our goal is to find an energy
surface, so that we can express the energy for a specific pair
$(\mathbf{x}, \mathbf{y})$. From the previous section, we showed that
Poincar\'e's criterion is satisfied and this implies that we can write
the vector field as the derivative of a scalar field. Moreover, it
illustrates that this vector field is a conservative field and this
means that the vector field is a gradient of some scalar function,
which in this case is the energy function of a GAE:
\begin{align*}
    r(\mathbf{y}|\mathbf{x}) - \mathbf{y} = \nabla E\nonumber .\
\end{align*}
Hence, by integrating out the trajectory of the GAE $(\mathbf{x},\mathbf{y})$,
we can measure the energy along a path. Moreover, the line integral
of a conservative vector field is path independent, which 
allows us to take the anti-derivative of the scalar function:
\begin{align}
    E(\mathbf{y}|\mathbf{x}) =& \int (r(\mathbf{y}|\mathbf{x})- \mathbf{y}) d\mathbf{y} = \int W^Y\left( \left(W^X\mathbf{x}) \odot W^Hh(\mathbf{u}\right)\right) d\mathbf{y} - \int  \mathbf{y} d\mathbf{y}\nonumber\\
    =& W^Y \left( \left(W^X\mathbf{x}\right) \odot  W^H\int h\left(\mathbf{u}\right) d\mathbf{y}\right) - \int \mathbf{y} d\mathbf{y}
    \label{exgy},
\end{align}
where $\mathbf{u}$ is an auxiliary variable such that $\mathbf{u} = W^H((W^Y\mathbf{y}) \odot (W^X\mathbf{x})) $ and 
    $\frac{d\mathbf{u}}{d\mathbf{y}} = W^H(W^Y \odot (W^X\mathbf{x} \otimes \mathbf{1}_D ))$, 
and $\otimes$ is the Kronecker product. Moreover, the decoder can be re-formulated as
\begin{align*}
    r(\boldsymbol{y}|\boldsymbol{x}) &= (W^Y)^T( W^X\boldsymbol{x} \odot (W^H)^Th(\boldsymbol{y},\boldsymbol{x})) \\
                                     &= \left((W^Y)^T \odot
                                       (W^X\boldsymbol{x} \otimes
                                       \boldsymbol{1}_D)\right)
                                     (W^H)^Th(\boldsymbol{y},\boldsymbol{x}) .\
\end{align*}
Re-writing Eq.~\ref{exgy} in terms of the auxiliary variable $\mathbf{u}$, we get
\begin{align}
    E(\mathbf{y}|\mathbf{x}) =& \left((W^Y)^T \odot  (W^Y\mathbf{x} \otimes \mathbf{1}_D)\right) (W^H)^T \\&\int h(\mathbf{u}) 
                              \left(W^H\left( W^Y \odot (W^X\mathbf{x}
                                  \otimes \mathbf{1}_D
                          )\right)\right)^{-1}d\mathbf{u} - \int
                                  \mathbf{y} d\mathbf{y} \nonumber \\
    = & \int h(\mathbf{u}) d\mathbf{u} - \frac{1}{2}\mathbf{y}^2 +
    \text{const} .\
    \label{exgy2}
\end{align}
A more detailed derivation from Eq.~\ref{exgy} to Eq.~\ref{exgy2} is
provided in the Appendix~\ref{App:AppendixA.1}.
%
 Identical to \cite{Kamyshanska2013}, if $h(\mathbf{u})$ is an element-wise activation function and
 we know its anti-derivative, then it is very 
simple to compute $E(\mathbf{x},\mathbf{y})$.

\section{Relationship to  Restricted Boltzmann Machines}
In this section, we relate GAEs through the scoring
function to other types of Restricted Boltzmann Machines, such
as the Factored Gated Conditional RBM \cite{Taylor2009} and the
Mean-covariance RBM \cite{Ranzato2010}.

\subsection{Gated Auto-encoder and Factored Gated Conditional Restricted Boltzmann Machines}
Kamyshanska and Memisevic showed that several hidden activation
functions defined gradient fields, including sigmoid, softmax, tanh,
linear, rectified linear function (ReLU), modulus, and squaring. These
activation functions are applicable to GAEs as well.

In the case of the sigmoid activation function, $\sigma = h(\mathbf{u}) = \frac{1}{1+\exp{(-\mathbf{u})}}$,
our energy function becomes
\begin{align}
    E_{\sigma} = &2\int (1+\exp{-(\mathbf{u})})^{-1} d\mathbf{u} -
    \frac{1}{2}(\mathbf{x}^2 + \mathbf{y}^2)+ \text{const} ,\ \nonumber \\
        = &2\sum_k \log{( 1+\exp{(W^H_{k\cdot}(W^X\mathbf{x}\odot W^X\mathbf{y}))})} 
     - \frac{1}{2}(\mathbf{x}^2 + \mathbf{y}^2)+ \text{const} .\ \nonumber 
\end{align}
Note that if we consider the conditional GAE we reconstruct $\mathbf{x}$
given $\mathbf{y}$ only, this yields
\begin{equation}
    E_\sigma(\mathbf{y}|\mathbf{x}) = \sum_k \log{( 1+\exp{(W^H(W^Y_{k\cdot}\mathbf{y}\odot W^X_{k\cdot}\mathbf{x}))})}
        - \frac{\mathbf{y}^2}{2}+ \text{const} .\
\end{equation}
This expression is identical, up to a constant, to the free energy in
a Factored Gated Conditional Restricted Boltzmann Machine (FCRBM) with
Gaussian visible units and Bernoulli hidden units. We have ignored
biases for simplicity.
%
%
A derivation including biases is provided in the Appendix~\ref{App:AppendixB.1}.

\subsection{Mean-Covariance Auto-encoder and Mean-covariance Restricted Boltzmann Machines}
The Covariance auto-encoder (cAE) was introduced in
\cite{Memisevic2011}. It is a specific form of symmetrically trained
auto-encoder with identical inputs: $\mathbf{x} = \mathbf{y}$, and
tied input weights: $W^X = W^Y$. It
maintains a set of relational mapping units to model
covariance between pixels. One can introduce a separate set of mapping
units connected pairwise to only one of the inputs which model the
mean intensity. In this case, the model becomes a Mean-covariance
auto-encoder (mcAE).
\begin{theorem}
    Consider a cAE with encoder and decoder:
    \begin{align*}
        h(\mathbf{x}) &= h(W^H((W^X\mathbf{x})^2) + \mathbf{b})\nonumber \\
        r(\mathbf{x}| h) &= (W^X)^T(W^X\mathbf{x} \odot (W^H)^Th(\mathbf{x}))  + \mathbf{a},\nonumber 
    \end{align*}
     \noindent where $\theta = \lbrace W^X, W^H,
    \mathbf{a}, \mathbf{b}\rbrace$ are the parameters of the model,
    and $h(\mathbf{z})=\frac{1}{1+\exp{(-\mathbf{z}})}$ is a sigmoid.
    Moreover, consider a Covariance RBM \cite{Ranzato2010} with
    Gaussian-distributed visibles and Bernoulli-distributed hiddens, 
    with an energy function defined by
    \begin{equation*}
        E^c(\mathbf{x},\mathbf{h}) = \frac{(\mathbf{a}-\mathbf{x})^2}{\sigma^2}- \sum_f P\mathbf{h} (C\mathbf{x})^2 -\mathbf{bh}.
    \end{equation*}
    Then the energy function of the cAE with dynamics 
    $r(\mathbf{x}|\mathbf{y}) - \mathbf{x}$ is equivalent to the free
    energy of Covariance RBM
    up to a constant:
    \begin{align}
        E(\mathbf{x},\mathbf{x}) = \sum_k \log
        \left(1+\exp\left(W^H(W^X\mathbf{x})^2+\mathbf{b}\right)\right) -
        \frac{\mathbf{x}^2}{2} + \textnormal{const} .\
    \end{align}

    \label{covtheorem}
\end{theorem}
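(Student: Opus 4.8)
The plan is to obtain the left-hand side by specializing the general GAE scoring formula to the constrained cAE architecture, to obtain the right-hand side by an explicit free-energy computation, and then to match the two expressions term by term. For the left-hand side I would instantiate Eq.~\ref{exgy2} (in the bias-carrying form derived in Appendix~\ref{App:AppendixB.1}) in the cAE setting: imposing $\mathbf{x}=\mathbf{y}$ and $W^X=W^Y$ collapses the auxiliary variable to $\mathbf{u}=W^H\big((W^X\mathbf{x})\odot(W^X\mathbf{x})\big)+\mathbf{b}=W^H(W^X\mathbf{x})^2+\mathbf{b}$, and the decoder becomes $r(\mathbf{x}\mid h)=(W^X)^T\big((W^X\mathbf{x})\odot(W^H)^T h(\mathbf{x})\big)+\mathbf{a}$, so the relevant vector field is $F(\mathbf{x})=r(\mathbf{x}\mid h)-\mathbf{x}$. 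Before integrating I would re-check Poincar\'e's criterion for this tied, symmetric field --- this is the step I expect to be the main obstacle --- because the same variable now sits in both input slots and the reconstruction is quadratic in $\mathbf{x}$: the Jacobian becomes $d\mathbf{u}/d\mathbf{x}=2\,W^H\,\diag(W^X\mathbf{x})\,W^X$, which carries an extra factor of two, and one must verify that $\partial F_i/\partial x_j=\partial F_j/\partial x_i$ still holds. This follows from $W^X=W^Y$ just as in the remark closing Section~\ref{vfr} (with $\boldsymbol{\xi}=\boldsymbol{\gamma}=\mathbf{x}$), but the chain-rule bookkeeping must be done with care. Granting integrability, the line integral in Eq.~\ref{exgy2} is path-independent; changing variables from $\mathbf{y}$ to $\mathbf{u}$ and collecting the linear and quadratic visible terms gives $E(\mathbf{x},\mathbf{x})=\int h(\mathbf{u})\,d\mathbf{u}-\tfrac12\mathbf{x}^2+\mathrm{const}$, and with the sigmoid $h$, whose antiderivative is $\log(1+\exp(\cdot))$, this is precisely the claimed $\sum_k\log\big(1+\exp(W^H(W^X\mathbf{x})^2+\mathbf{b})\big)-\tfrac12\mathbf{x}^2+\mathrm{const}$.

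For the right-hand side I would compute the covariance-RBM free energy $\mathcal{F}(\mathbf{x})=-\log\sum_{\mathbf{h}\in\{0,1\}^M}\exp\!\big(-E^c(\mathbf{x},\mathbf{h})\big)$ directly from the stated energy $E^c(\mathbf{x},\mathbf{h})=\frac{(\mathbf{a}-\mathbf{x})^2}{\sigma^2}-\sum_f P\mathbf{h}\,(C\mathbf{x})^2-\mathbf{bh}$. Because the hiddens are conditionally independent Bernoulli, the sum over $\mathbf{h}$ factorizes across the $M$ units, yielding $\mathcal{F}(\mathbf{x})=\frac{(\mathbf{a}-\mathbf{x})^2}{\sigma^2}-\sum_k\log\big(1+\exp((P^T)_{k\cdot}(C\mathbf{x})^2+b_k)\big)$.

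It then remains to read off the parameter dictionary between the two models --- $C\leftrightarrow W^X$ and $P^T\leftrightarrow W^H$ --- and to note that for fixed $\sigma$ the Gaussian visible term $\frac{(\mathbf{a}-\mathbf{x})^2}{\sigma^2}$ equals $-\tfrac12\mathbf{x}^2$ plus a term affine in $\mathbf{x}$ plus a constant. Comparing with the expression from the first paragraph --- and recalling that the dynamical-systems energy is a \emph{goodness}, i.e.\ it matches $-\mathcal{F}$ up to an overall sign --- the two coincide up to an additive constant and the affine-in-$\mathbf{x}$ piece absorbed by the output bias $\mathbf{a}$, which is exactly the meaning of ``equivalent up to a constant.'' As a closing remark I would observe that the ungated, untied mean channel of the mcAE contributes to $E$ a term of exactly the classical auto-encoder form analyzed in \cite{Kamyshanska2013}, so the mcAE/mcRBM correspondence follows by applying the same argument additively.
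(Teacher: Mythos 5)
Your proposal follows essentially the same route as the paper's proof in Appendix~\ref{App:AppendixB.2}: specialize the general GAE energy (Eq.~\ref{exgy2}) to the tied case $\mathbf{y}=\mathbf{x}$, $W^X=W^Y$ to obtain $\int h(\mathbf{u})\,d\mathbf{u}-\tfrac{1}{2}\mathbf{x}^2+\mathrm{const}$ with $\mathbf{u}=W^H(W^X\mathbf{x})^2+\mathbf{b}$, marginalize the Bernoulli hiddens of the covariance RBM to get its free energy, and identify $C\leftrightarrow W^X$ and $P^T\leftrightarrow W^H$. The integrability and factor-of-two subtlety you flag for the tied quadratic field is a genuine point that the paper only handles implicitly, via the $\boldsymbol{\xi},\boldsymbol{\gamma}$ doubling of Appendix~\ref{App:AppendixA.1} which treats the two input slots as formally independent during integration; apart from making that worry explicit, your argument matches the paper's.
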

The proof is given in the Appendix~\ref{App:AppendixB.2}. 
We can extend this analysis to the mcAE
by using the above theorem and the results from \cite{Kamyshanska2013}.
\begin{corollary}
    The energy function of a mcAE and the free
    energy of a Mean-covariance RBM
    (mcRBM) with Gaussian-distributed visibles and
    Bernoulli-distributed hiddens are equivalent up to a constant. The
    energy of the mcAE is:
    \begin{equation}
        E = \sum_k \log \left(1+\exp\left(-W^H(W^X\mathbf{x})^2-\mathbf{b}\right) \right)
        + \sum_k \log \left(1+\exp (W\mathbf{x}+\mathbf{c})\right)- \mathbf{x}^2+\textnormal{const}
    \end{equation}
    where $\theta^m = \lbrace W, \mathbf{c}\rbrace$ parameterizes the
    mean mapping units and $\theta^c = \lbrace W^X, W^H,$ $\mathbf{a},
    \mathbf{b}\rbrace$ parameterizes the covariance mapping units.
\end{corollary}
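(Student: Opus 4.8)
The plan is to exploit the additive structure of the mean--covariance model. By construction the mcAE reconstruction decomposes as $r(\mathbf{x}) = r^c(\mathbf{x}) + r^m(\mathbf{x})$, where $r^c$ is exactly the cAE decoder of Theorem~\ref{covtheorem}, driven by the covariance mapping units $\theta^c=\{W^X,W^H,\mathbf{a},\mathbf{b}\}$, and $r^m$ is an ordinary (ungated) sigmoid-hidden auto-encoder decoder driven by the mean mapping units $\theta^m=\{W,\mathbf{c}\}$. Accordingly the vector field splits as $F(\mathbf{x}) = \bigl(r^c(\mathbf{x})-\mathbf{x}\bigr) + \bigl(r^m(\mathbf{x})-\mathbf{x}\bigr)$. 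The first summand is conservative by (the proof of) Theorem~\ref{covtheorem}, and the second is conservative by the corresponding result of \cite{Kamyshanska2013}; since a sum of gradient fields is again a gradient field, $F$ automatically satisfies Poincar\'e's criterion and no separate integrability check is needed.

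The core step is then linearity of the line integral: $E = \int F\,d\mathbf{x} = \int\bigl(r^c(\mathbf{x})-\mathbf{x}\bigr)d\mathbf{x} + \int\bigl(r^m(\mathbf{x})-\mathbf{x}\bigr)d\mathbf{x}$. By Theorem~\ref{covtheorem} the first integral equals the covariance energy $\sum_k \log\bigl(1+\exp(-W^H(W^X\mathbf{x})^2-\mathbf{b})\bigr) - \tfrac12\mathbf{x}^2$ up to a constant, and by \cite{Kamyshanska2013} the second equals $\sum_k \log\bigl(1+\exp(W\mathbf{x}+\mathbf{c})\bigr) - \tfrac12\mathbf{x}^2$ up to a constant (this is just the energy of a Gaussian--Bernoulli auto-encoder with sigmoid hiddens). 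Adding the two $-\tfrac12\mathbf{x}^2$ terms produces the $-\mathbf{x}^2$ in the statement, and collecting the leftover additive constants gives the claimed closed form for $E$.

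For the identification with the mcRBM free energy I would use that the mcRBM energy is, by definition, the sum of a covariance-RBM energy and a Gaussian--Bernoulli mean-RBM energy over two disjoint groups of hidden units, so its free energy is the sum of the two component free energies. Theorem~\ref{covtheorem} already matches the covariance component to $E^c$ up to a constant, and the ungated scoring result of \cite{Kamyshanska2013} matches the mean component to $E^m$ up to a constant; summing the two identities gives $E = F_{\mathrm{mcRBM}} + \textnormal{const}$, which is the assertion of the corollary.

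I expect the only real difficulty to be bookkeeping of signs and constants rather than anything structural. In particular one must track why the covariance filters appear inside the exponential with the opposite sign to their appearance in the bare cAE expression of Theorem~\ref{covtheorem} --- this comes from the negative-definite quadratic form $-\sum_f P\mathbf{h}(C\mathbf{x})^2$ in the mcRBM energy --- and one must verify that the Gaussian visible penalties $\tfrac{(\mathbf{a}-\mathbf{x})^2}{\sigma^2}$ contributed by both RBM components are absorbed consistently into the single $-\mathbf{x}^2$ term plus the additive constant. A minor auxiliary point worth stating is that the mean and covariance mapping units genuinely decouple in the mcAE decoder (no cross term mixing $W$ with $W^H$), which is what licenses the additive split of $F$ in the first place.
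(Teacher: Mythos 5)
Your proposal is correct and follows essentially the same route as the paper's own proof: decompose the mcAE energy additively as $E_{mc}=E_m+E_c$, invoke Theorem~\ref{covtheorem} for the covariance part and the result of \cite{Kamyshanska2013} for the mean part, and use the fact (from \cite{Ranzato2010}) that the mcRBM free energy is the sum of the mean-RBM and covariance-RBM free energies. You supply more detail than the paper does (the conservativity of the summed vector field, the doubling of the $-\tfrac12\mathbf{x}^2$ terms, and the sign bookkeeping), but the structure of the argument is identical.
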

\begin{proof}
    The proof is very simple. Let $E_{mc} = E_m + E_c$, where $E_m$ is
    the energy of the mean auto-encoder, $E_c$ is
    the energy of the covariance auto-encoder, and $E_{mc}$ is the
    energy of the mcAE. We know
    from Theorem \ref{covtheorem} that $E_c$ is equivalent to the free
    energy of a covariance RBM, and the results
    from \cite{Kamyshanska2013} show that that $E_m$ is equivalent to
    the free energy of mean (classical) RBM. As shown in
    \cite{Ranzato2010}, the free energy of a mcRBM is equal to
    summing the free energies of a mean RBM and a
    covariance RBM.
%
\end{proof}

\section{Classification with Gated Auto-encoders}

Kamyshanska and Memisevic demonstrated that one application of the
ability to assign energy or scores to auto-encoders was in constructing a
classifier from class-specific auto-encoders. In this section, we
explore two different paradigms for classification. Similar to
that work, we consider the usual multi-class problem by
first training class-specific auto-encoders, and using their energy
functions as confidence scores. We also consider the more challenging
structured output problem, specifically, the case of multi-label
prediction where a data point may have more than one associated label,
and there may be correlations among the labels.

\subsection{Classification using class-specific gated auto-encoders}

One approach to classification is to take several class-specific
models and assemble them into a classifier. The best-known example of this approach
is to fit several directed graphical models and use Bayes' rule to
combine them. The process is simple because the models are normalized, or
calibrated. While it is possible to apply a similar technique to undirected
or non-normalized models such as auto-encoders, one must take
care to calibrate them. 

The approach proposed in \cite{Kamyshanska2013} is to train $K$
class-specific auto-encoders, each of which assigns a non-normalized
energy to the data $E_i \left(\mathbf{x} \right), i=1 \ldots,K$, and
then define the conditional distribution over classes $z_i$ as
\begin{align}
P(z_i|\mathbf{x}) = \frac{\exp \left( E_i \left( \mathbf{x} \right) +
    B_i \right)}{\sum_j \exp \left( E_j \left( \mathbf{x} \right) +
    B_j \right)} ,\ \label{eqn:gae:classification}
\end{align}
\noindent where $B_i$ is a learned bias for class $i$. The bias terms
take the role of calibrating the unnormalized energies. Note that we
can similarly combine the energies from a symmetric gated auto-encoder where $\mathbf{x}=\mathbf{y}$ (i.e.~a
covariance auto-encoder) and apply
Eq.~\ref{eqn:gae:classification}. If, for each class, we train both a
covariance auto-encoder and a classical auto-encoder (i.e.~a ``mean''
auto-encoder) then we can combine both sets of unnormalized energies
as follows
\begin{equation}
    P_{mcAE}(z_i|\mathbf{x}) = \frac{\exp(E^M_i(\mathbf{x}) + E^C_i(\mathbf{x})+B_i)}
                        {\sum_j\exp(E^M_j(\mathbf{x})+
                          E^C_j(\mathbf{x})+B_j)} ,\ \label{GAES_eq}
\end{equation}

\noindent where $E_i^M(\mathbf{x})$ is the energy which comes from the
``mean'' (standard) auto-encoder trained on class $i$ and
$E_i^C(\mathbf{x})$ the energy which comes from the ``covariance''
(gated) auto-encoder trained on class $i$. We call the classifiers in
Eq.~\ref{eqn:gae:classification} and Eq.~\ref{GAES_eq} ``Covariance
Auto-encoder Scoring'' (cAES) and ``Mean-Covariance Auto-encoder
Scoring'' (mcAES),
respectively.

The training procedure is summarized as follows:
\begin{enumerate}
    \item Train a (mean)-covariance
      auto-encoder individually 
      for each class. Both the mean and covariance auto-encoder have
      tied weights in the encoder and decoder. The covariance
      auto-encoder is a gated auto-encoder with tied inputs. 
    \item Learn the $B_i$ calibration terms using maximum
      likelihood, and backpropagate to the GAE parameters.

\end{enumerate}

\subsubsection{Experimental results}

We followed the same experimental setup as \cite{Memisevic2010}
where we used a standard set of ``Deep Learning Benchmarks'' \cite{Larochelle2007}.
We used mini-batch stochastic gradient descent to optimize parameters
during training. The hyper-parameters: number of hiddens,
number of factors, corruption level, learning rate, weight-decay, momentum
rate, and batch sizes were chosen based on a held-out validation set. Corruption
levels and weight-decay were selected from $\lbrace 0, 0.1,0.2,0.3,0.4,0.5\rbrace$,
and number of hidden and factors were selected from 100,300,500. We
selected the learning rate and weight-decay from the range $(0.001, 0.0001)$.

Classification error results are shown in Table \ref{mcaes_err}. 
First, the error rates of auto-encoder scoring variant methods
illustrate that across all datasets
AES outperforms cAES and mcAES outperforms both AES and cAES.
AE models pixel means and cAE models pixel covariance, while mcAE models both mean and 
 covariance, making it naturally more expressive.
We observe that cAES and mcAES achieve lower error rates by a large margin on rotated MNIST
with backgrounds (final row). On the other hand, both cAES and mcAES perform poorly on
MNIST with random white noise background (second row from bottom). We
believe this phenomenon is due to the inability to model covariance in
this dataset. In MNIST with random white noise the pixels are
typically uncorrelated, where in rotated MNIST with backgrounds the
correlations are present and consistent.

\begin{table}[htpt]
\begin{center}
\begin{small}
\begin{tabular}{lccccccc}
\hline
DATA & SVM & RBM & DEEP & GSM & AES & cAES & mcAES\\
     &{\tiny RBF} &     & {\tiny SAA$_3$}    &     &     \\    
\hline
RECT            & 2.15  & 4.71 & 2.14       & 0.56      & 0.84       & 0.61 & \bf{0.54}\\
RECT\tiny{IMG}  & 24.04 & 23.69& 24.05      & 22.51     & 21.45      & 22.85& \bf{21.41} \\
CONVEX          & 19.13 & 19.92& 18.41      & \bf{17.08}& 21.52      & 21.6 & 20.63 \\
MNIST\tiny{SMALL}&3.03  & 3.94 & 3.46       & 3.70      & \bf{2.61}  & 3.65 & 3.65  \\
MNIST\tiny{ROT} & 11.11 & 14.69& \bf{10.30} & 11.75     & 11.25      & 16.5 & 13.42 \\
MNIST\tiny{RAND}& 14.58 & 9.80 & 11.28      & 10.48     & \bf{9.70}  & 18.65& 16.73 \\
MNIST\tiny{ROTIM}&55.18 & 52.21& 51.93      & 55.16     & 47.14      & 39.98& \bf{35.52} \\
\hline
\end{tabular}
\caption{Classification error rates on the Deep Learning Benchmark dataset. SAA$_3$ stands for three-layer
    Stacked Auto-encoder. SVM and RBM results are from \cite{Vincent2010}, DEEP and GSM are
    results from \cite{Memisevic2011}, and AES is from \cite{Kamyshanska2013}.}
\label{mcaes_err}
\end{small}
\end{center}
\vskip -0.1in
\end{table}

\subsection{Multi-label classification via optimization in label
  space} \label{str_method}

The dominant application of deep learning approaches to vision has
been the assignment of images to discrete classes (e.g.~object
recognition).  Many applications, however, involve ``structured
outputs'' where the output variable is high-dimensional and has a
complex, multi-modal joint distribution. Structured output prediction
may include tasks such as multi-label classification where there are
regularities to be learned in the output, and segmentation, where the
output is as high-dimensional as the input. A key challenge to such
approaches lies in developing models that are able to capture complex,
high level structure like shape, while still remaining tractable.

Though our proposed work is based on a deterministic model, we have
shown that the energy, or scoring function of the GAE is equivalent,
up to a constant, to that of a conditional RBM, a model that has
already seen some use in structured prediction problems
\cite{Mnih2011,Li2013}.

GAE scoring can be applied to structured output problems as a type of
``post-classification'' \cite{MnihHinton2010}. The idea is to let a
nai\"ve, non-structured classifier make an initial prediction of the
outputs in a fast, feed-forward manner, and then allow a second model
(in our case, a GAE) clean up the outputs of the first model. Since
GAEs can model the relationship between input $\mathbf{x}$ and
structured output $\mathbf{y}$, we can initialize the output with the
output of the nai\"ve model, and then optimize its energy function with
respect to the outputs. Input $\mathbf{x}$ is held constant throughout
the optimization.
%
%

Li \textit{et al} recently proposed Compositional High Order Pattern
Potentials, a hybrid of Conditional Random
Fields (CRF) and Restricted Boltzmann Machines. The RBM provides a
global shape information prior to the locally-connected CRF. Adopting
the idea of \emph{learning} structured relationships between
outputs, we propose an alternate approach which the inputs of the GAE
are not $(\mathbf{x},\mathbf{y})$ but $(\mathbf{y},\mathbf{y})$. In
other words, the post-classification model is a covariance
auto-encoder. The intuition behind the first approach is to use a GAE
to learn the relationship between the input $\mathbf{x}$ and the
output $\mathbf{y}$, whereas the second method aims to learn the
correlations between the outputs $\mathbf{y}$.  

We denote our two proposed methods GAE$_{XY}$ and
GAE$_{Y^2}$. GAE$_{XY}$ corresponds to a GAE, trained conditionally,
whose mapping units directly model the relationship between input and
output and GAE$_{Y^2}$ corresponds to a GAE which models correlations
between output dimensions. GAE$_{XY}$ defines $E
\left(\mathbf{y}|\mathbf{x}\right)$, while GAE$_{Y^2}$ defines $E
\left(\mathbf{y}|\mathbf{y}\right) = E(\mathbf{y})$. 
They differ only in terms of the data vectors that they consume. The
training and test procedures are detailed in Algorithm
\ref{algo:SOLRG}.


%

\begin{algorithm}[h]
    \caption{\small{Structured Output Prediction with GAE scoring}}
    \label{algo:SOLRG}
    {\small 
    \begin{algorithmic}[1]
        \Procedure{Multi-label Classification}{$\mathcal{D} = \lbrace (\mathbf{x}_i,\mathbf{y}_i) \in \mathcal{X}_{train} \times \mathcal{Y}_{train} \rbrace$ }
        \State{Train a Multi-layer Perceptron (MLP) to learn an input-output mapping $f(\cdot)$:
            \begin{align}
                \argmin_{\theta_1} l(\mathbf{x},\mathbf{y};\theta_1) = \sum_i
                \text{loss}_1 \left( (f \left(\mathbf{x}_{i}; \theta_1
                  \right) -
                \mathbf{y}_{i} \right)
            \end{align}
        \indent where $\text{loss}_1$ is an appropriate loss function for the MLP.\footnotemark}
    \State{Train a Gated Auto-encoder with inputs $(\mathbf{x}_i,
        \mathbf{y}_{i});$ For the case of GAE$_{Y^2}$, set $\mathbf{x}_i=\mathbf{y}_i$. 
        \begin{align}
            \argmin_{\theta_2} l(\mathbf{x},\mathbf{y};\theta_2) = \sum_i
            \text{loss}_2 \left(
              r(\mathbf{y}_{i}|\mathbf{x}_{i}, \theta_2) -
              \mathbf{y}_{i} \right)
        \end{align}
        \indent where $\text{loss}_2$ is an appropriate reconstructive loss for the auto-encoder.}
        \For {each test data point $\mathbf{x_i} \in \mathcal{X}_{test}$}
        \State{Initialize the output using the MLP. 
            \begin{align}
                \mathbf{y}_0 = f \left(\mathbf{x}_{test} \right)
            \end{align}
            \While {$\|E(\mathbf{y}_{t+1}|\mathbf{x})- E(\mathbf{y}_{t}|\mathbf{x})\| > \epsilon$ or $\leq$ max. iter.}
                \State{Compute $\triangledown_{\mathbf{y}_{t}} E$}
                \State{Update $\mathbf{y}_{t+1} = \mathbf{y}_t - \lambda \triangledown_{\mathbf{y}_t} E$}
                \State{where $\epsilon$ is the tolerance rate with respect to the
                convergence of the optimization.}
            \EndWhile 
        } 
        \EndFor
        \EndProcedure
    \end{algorithmic}}
\end{algorithm}
\AtBeginShipoutNext{\footnotetext{In our experiments, we used the cross-entropy loss function for loss$_1$ and loss$_2$.}}

\subsubsection{Experimental results}

We consider multi-label classification, where
the problem is to classify instances which can take on more than one
label at a time. We followed
the same experimental set up as \cite{Mnih2011}. Four multi-labeled
datasets were considered:
Yeast \cite{Elisseff2002} consists of biological attributes, Scene \cite{Boutell2004}
is image-based, and MTurk \cite{Mandel2010} and MajMin
\cite{Mandel2008} are targeted towards tagging music. Yeast consists of 103 biological
attributes and has 14 possible labels, Scene consists of 294 image pixels with
6 possible labels, and MTurk and MajMin each consist of 389 audio features extracted from
 music and have 92 and 96 possible tags, respectively. Figure \ref{fig:data_cov} visualizes the covariance matrix
for the label dimensions in each dataset. We can see from this that
there are correlations present in the labels which suggests that a
structured approach may improve on a non-structured predictor.

\begin{figure}[htp]
    \centering
    \includegraphics[width=1.0\textwidth]{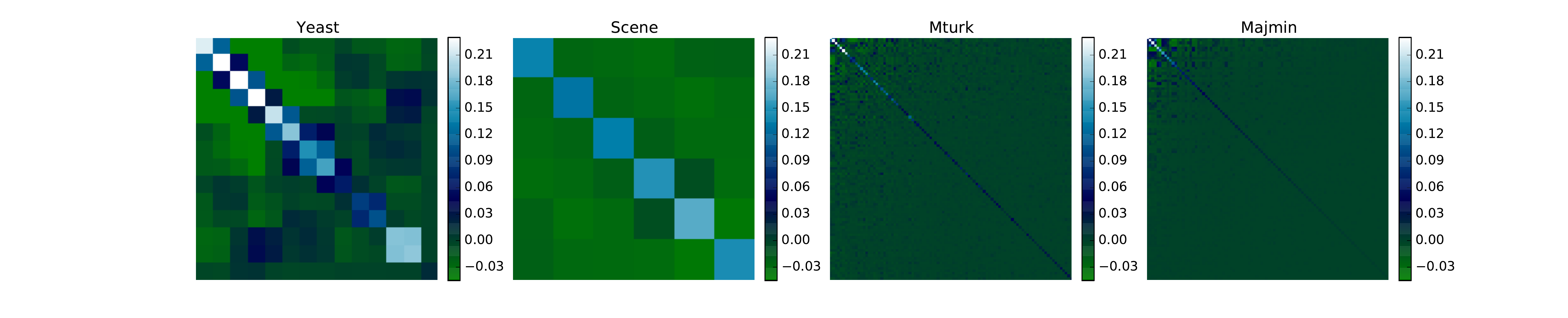}
\caption{Covariance matrices for the multi-label datasets: Yeast,
  Scene, MTurk, and MajMin.}
\label{fig:data_cov}
    \vspace{-0.4cm}
\end{figure}

We compared our proposed approaches to logistic regression, a standard
MLP, and the two structured CRBM training
algorithms presented in \cite{Mnih2011}.
To permit a fair comparison, we followed the same procedure for training and
reporting errors as in that paper, where we cross validated over 10
folds and training, validation, test examples
are randomly separated into 80\%, 10\%, and 10\% in each fold. 
The error rate was measured by averaging the errors on each label dimension.

\begin{table}
    \centering
    \begin{small}
        \begin{tabular}{lcccr}
        \hline
        Method & Yeast & Scene & MTurk & MajMin\\
        \hline
        LogReg       & 20.16  & 10.11  & 8.10 & 4.34\\
        HashCRBM$^*$ & 20.02  & 8.80   & 7.24 & 4.24\\
        MLP          & 19.79  & 8.99   & 7.13  & 4.23 \\
        GAES$_{XY}$  & \textbf{19.27}  & 6.83   & \textbf{6.59}  & \textbf{3.96} \\
        GAES$_{Y^2}$ & 19.58  & \textbf{6.81}   & \textbf{6.59}  & 4.29 \\
        \end{tabular}

        \caption{Error rate on multi-label datasets. As in previous
          work, we report the mean across 10 repeated runs with
          different random weight initializations.}
        \label{tab:performance}
    \end{small}
\end{table}

The performance on four multi-label datasets is shown in Table \ref{tab:performance}.
We observed that adding a small amount of Gaussian noise to the input $\mathbf{y}$ improved the performance for GAE$_{XY}$. However,
adding noise to the input $\mathbf{x}$ did not have as much of an effect. 
We suspect that adding noise makes the GAE more robust to the input provided by the MLP.
 Interestingly, we found that the performance of GAE$_{Y^2}$ was negatively affected
 by adding noise. Both of our proposed methods, GAES$_{XY}$ and GAES$_{Y^2}$
 generally outperformed the other methods except for GAES$_{Y^2}$ on the MajMin dataset.
At least for these datasets, there is no clear winner between the two.
 GAES$_{XY}$ achieved lower error than GAES$_{Y^2}$ for Yeast and MajMin,
 and the same error rate on the MTurk dataset. However, GAES$_{Y^2}$ outperforms GAES$_{XY}$ on the Scene dataset.
 Overall, the results show that GAE scoring may be a promising means of post-classification in structured output prediction.

\section{Conclusion}
There have been many theoretical and empirical studies on
auto-encoders \cite{Vincent2008,Rifai2011,Swersky2011,Vincent2010,Alain2013,Kamyshanska2013},
however, the theoretical study of gated
auto-encoders is limited apart from \cite{Memisevic2011,Alain2013b}.
The GAE has several intriguing properties that a classical
auto-encoder does not, based on its ability to model relations among
pixel intensities rather than just the intensities themselves. This opens
up a broader set of applications.  In this paper, we
derive some theoretical results for the GAE that enable us to gain
more insight and understanding of its operation.

We cast the GAE as a dynamical system driven by a vector field in
order to analyze the model. In the first part of the paper, by
following the same procedure as \cite{Kamyshanska2013}, we showed that
the GAE could be scored according to an energy function. From this
perspective, we demonstrated the equivalency of the GAE energy to the
free energy of a FCRBM with Gaussian visible units, Bernoulli hidden
units, and sigmoid hidden activations.  In the same manner, we also
showed that the covariance auto-encoder can be formulated in a way
such that its energy function is the same as the free energy of a
covariance RBM, and this naturally led to a connection between the mean-covariance
auto-encoder and mean-covariance RBM.
One interesting observation is
that Gaussian-Bernoulli RBMs have been reported to be difficult to
train \cite{Krizhevsky2009,Cho2011}, and the success of training RBMs
is highly dependent on the training setup
\cite{Wang2014}. Auto-encoders are an attractive alternative, even when an energy function is required.

Structured output prediction is a natural next step for representation learning.
The main advantage of our approach compared to other popular approaches such
as Markov Random Fields, is that inference is extremely fast, using a gradient-based 
optimization of the auto-encoder scoring function. In the future, we plan on 
tackling more challenging structured output prediction problems.

\bibliography{scoring_and_classification_with_gae}
\bibliographystyle{splncs03}

\ifx
\newpage
\appendix

\section{Gated Auto-encoder Scoring} \label{App:AppendixA}

\subsection{Vector field representation} \label{App:poincare}
To check that the vector
field can be written as the derivative of a scalar field,
we can submit to Poincar\'e's
integrability criterion: For some open, simple connected set
$\mathcal{U}$, a continuously differentiable function $F:\mathcal{U}
\rightarrow \Re^m$ defines a gradient field if and only if
\begin{align}
    \frac{\partial F_i(\mathbf{y})}{\partial y_j} = 
    \frac{\partial F_j(\mathbf{y})}{\partial y_i}, \text{ } \forall i,j=1 \cdots n.\nonumber
\end{align}
Considering the GAE, note that $i^{th}$ component of the decoder
$r_i(\mathbf{y}|\mathbf{x})$ can be rewritten as
\begin{align*}
    r_i(\mathbf{y}|\mathbf{x}) &= (W^Y_{\cdot i})^T( W^X\mathbf{x} \odot (W^H)^Th(\mathbf{y},\mathbf{x})) 
                               = (W^Y_{\cdot i} \odot
                               W^X\mathbf{x})^T
                               (W^H)^Th(\mathbf{y},\mathbf{x}) .\
\end{align*}
The derivatives of $r_i(\mathbf{y}|\mathbf{x}) - y_i$ with respect to $y_j$ are
\begin{align}
    \frac{\partial r_i(\mathbf{y}|\mathbf{x})}{\partial y_j} =&
    (W^Y_{\cdot i} \odot W^X\mathbf{x} )^T (W^H)^T \frac{\partial
      h(\mathbf{x},\mathbf{y})}{\partial y_j}
    = \frac{\partial r_j(\mathbf{y}|\mathbf{x})}{\partial y_i}
    \nonumber \label{poincare1}\\
    \frac{\partial h(\mathbf{y},\mathbf{x})}{\partial y_j} =&
    \frac{\partial h(\mathbf{u})}{\partial \mathbf{u}} W^H (W^Y_{\cdot j} \odot W^X\mathbf{x})
\end{align}
\noindent where $\mathbf{u}=   W^H((W^Y\mathbf{y}) \odot (W^X\mathbf{x}))$. 
By substituting Equation \ref{poincare1} into $\frac{\partial
  F_i}{\partial y_j}, \frac{\partial F_j}{\partial y_i}$, we have
\begin{align*}
    \frac{\partial F_i}{\partial y_j} \!  &= \! 
    \frac{\partial r_i(\mathbf{y}|\mathbf{x})}{\partial y_j} 
    \! - \! \delta_{ij}
    \! = \!
    \frac{\partial r_j(\mathbf{y}|\mathbf{x})}{\partial y_i} 
    \! - \! \delta_{ij}
    \! =  \! \frac{\partial F_j}{\partial y_i}
\end{align*}
\noindent where $\delta_{ij}=1$ for $i=j$ and $0$ for $i\neq j$.
Similarly, the derivatives of $r_i(\mathbf{y}|\mathbf{x}) - y_i$ with respect to $x_j$ are
\begin{align}
    \frac{\partial r_i(\mathbf{y}|\mathbf{x})}{\partial x_j} =&
    (W^Y_{\cdot i} \odot W^X_{\cdot j})^T(W^H)^Th(\mathbf{x},\mathbf{y}) 
    + (W^Y_{\cdot i}\odot W^X\mathbf{x})(W^H)^T \frac{\partial
      h}{\partial x_j} 
    = \frac{\partial r_j(\mathbf{y}|\mathbf{x})}{\partial
      x_i}    \label{poincare2} \nonumber ,\ \\
    \frac{\partial h(\mathbf{y},\mathbf{x})}{\partial x_j} =&
    \frac{\partial h(\mathbf{u})}{\partial \mathbf{u}} W^H (W^Y_{\cdot j} \odot W^X\mathbf{x}) .\
\end{align}
By substituting Equation \ref{poincare2} into $\frac{\partial F_i}{\partial x_j}, \frac{\partial F_j}{\partial x_i}$, this yields
\begin{align*}
    \! \frac{\partial F_i}{\partial x_j} \! &= \! 
    \frac{\partial r_i(\mathbf{x}|\mathbf{y})}{\partial x_j}
    \! = \!
    \frac{\partial r_j(\mathbf{x}|\mathbf{y})}{\partial x_i}
    \! = \! \frac{\partial F_j}{\partial x_i} .\ \nonumber 
\end{align*}

\subsection{Deriving an Energy Function} \label{App:AppendixA.1}

Integrating out the GAE's trajectory, we have 
\begin{align}
    E(\mathbf{y}|\mathbf{x}) =& \int_\mathcal{C} (r(\mathbf{y}|\mathbf{x})- \mathbf{y}) d\mathbf{y} \nonumber \\
        =& \int W^Y\left( \left(W^X\mathbf{x}) \odot W^Hh(\mathbf{u}\right)\right) d\mathbf{y} - \int  \mathbf{y} d\mathbf{y}\nonumber\\
    =& W^Y \left( \left(W^X\mathbf{x}\right) \odot  W^H\int h\left(\mathbf{u}\right) d\mathbf{u}\right) - \int \mathbf{y} d\mathbf{y}
    \label{exgypath},
\end{align}
where $\mathbf{u}$ is an auxiliary variable such that $\mathbf{u} = W^H((W^Y\mathbf{y}) \odot (W^X\mathbf{x})) $ and 
    $\frac{d\mathbf{u}}{d\mathbf{y}} = W^H(W^Y \odot (W^X\mathbf{x} \otimes \mathbf{1}_D ))$, 
where $\otimes$ is the Kronecker product. 
Consider the symmetric objective function, which is defined in Equation \ref{symobj}. Then we have to also
consider the vector field system where both symmetric cases $\mathbf{x}|\mathbf{y}$ and $\mathbf{y}|\mathbf{x}$ 
are valid. As mentioned in Section \ref{vfr}, let $\xi=[\mathbf{x};\mathbf{y}]$ and $\gamma=[\mathbf{y};\mathbf{x}]$.
As well, let $W^\xi = \diag (W^X, W^Y)$ and $W^\gamma = \diag (W^Y, W^X)$ where they are block diagonal matrices.
Consequently, the vector field becomes 
\begin{equation}
    F(\boldsymbol{\xi}|\boldsymbol{\gamma}) = r(\boldsymbol{\xi}|\boldsymbol{\gamma}) - \boldsymbol{\xi} ,\
\end{equation}
and the energy function becomes
\begin{align}
    E(\boldsymbol{\xi}|\boldsymbol{\gamma}) =& \int (r(\boldsymbol{\xi}|\boldsymbol{\gamma})- \boldsymbol{\xi}) d\boldsymbol{\xi} \nonumber\\
    =& \int (W^\xi)^T ( (W^\gamma\boldsymbol{\gamma}) \odot (W^H)^Th(\boldsymbol{u}))d\boldsymbol{\xi} - \int  \boldsymbol{\xi} d\boldsymbol{\xi}\nonumber\\
    =& (W^\xi)^T ( (W^\gamma\boldsymbol{\gamma}) \odot  (W^H)^T\int h(\boldsymbol{u}) d\boldsymbol{u}) - \int \boldsymbol{\xi} d\boldsymbol{\xi}\nonumber
\end{align}
where $\boldsymbol{u}$ is an auxiliary variable such that $\boldsymbol{u} = W^H \left((W^\xi\boldsymbol{\xi}) \odot (W^\gamma\boldsymbol{\gamma})\right)$. Then
\begin{equation}
    \frac{d \boldsymbol{u}}{d \boldsymbol{\xi}} = W^H\left(W^\xi \odot (W^\gamma\boldsymbol{\gamma} \otimes \boldsymbol{1}_D )\right) .\ \nonumber
\end{equation} 
Moreover, note that the decoder can be re-formulated as
\begin{align}
    r(\boldsymbol{\xi}|\boldsymbol{\gamma}) &= (W^\xi)^T( W^\gamma\boldsymbol{\gamma} \odot (W^H)^Th(\boldsymbol{\xi},\boldsymbol{\gamma})) \nonumber\\
                                            &= \left((W^\xi)^T \odot  (W^\gamma\boldsymbol{\gamma} \otimes \boldsymbol{1}_D)\right) (W^H)^Th(\boldsymbol{\xi},\boldsymbol{\gamma}) .\ \nonumber
\end{align}
Re-writing the first term of Equation \ref{exgypath} in terms of the auxiliary variable $\boldsymbol{u}$, the energy reduces to 
\begin{align}
    E(\boldsymbol{\xi}|\boldsymbol{\gamma}) =& \left((W^\xi)^T \odot  (W^\gamma\boldsymbol{\gamma} \otimes \boldsymbol{1}_D)\right) (W^H)^T \int h(\boldsymbol{u}) 
    \left(W^H(W^\xi \odot (W^\gamma\boldsymbol{\gamma} \otimes \boldsymbol{1}_D ))\right)^{-1}d\boldsymbol{u} - \int \boldsymbol{\xi} d\boldsymbol{\xi}\nonumber\\
    =& \left((W^\xi)^T \odot  (W^\gamma\boldsymbol{\gamma} \otimes \boldsymbol{1}_D)\right) (W^H)^T \left(
(W^\xi \odot (W^\gamma\boldsymbol{\gamma} \otimes \boldsymbol{1}_D ))W^H\right)^{-T}\int h(\boldsymbol{u}) d\boldsymbol{u}- \int \boldsymbol{\xi} d\boldsymbol{\xi}\nonumber\\
        =&\int h(\boldsymbol{u}) d\boldsymbol{u}    - \int \boldsymbol{\xi} d\boldsymbol{\xi}\nonumber\\
        = & \int h(\boldsymbol{u}) d\boldsymbol{u} -
        \frac{1}{2}\boldsymbol{\xi}^2 + \text{const} .\ \nonumber
\end{align}
\section{Relation to other types of Restricted Boltzmann Machines}\label{App:AppendixB}
\subsection{Gated Auto-encoder and Factored Gated Conditional Restricted Boltzmann Machines} \label{App:AppendixB.1}

Suppose that the hidden activation function is a sigmoid. Moreover, we define our Gated Auto-encoder
to consists of an encoder $h(\cdot)$ and decoder $r(\cdot)$ such that
\begin{align}
    h(\mathbf{x},\mathbf{y}) &= h(W^H ((W^X\mathbf{x}) \odot (W^Y\mathbf{y})) + \mathbf{b})\nonumber\\
    r(\mathbf{x}|\mathbf{y}, h) &= (W^X)^T ( (W^Y\mathbf{y}) \odot (W^H)^Th(\mathbf{x},\mathbf{y})) + \mathbf{a} ,\ \nonumber
\end{align}
where $\theta = \lbrace W^H, W^X, W^Y, \mathbf{b} \rbrace$ is the parameters of the model.
Note that the weights are not tied in this case.
The energy function for the Gated Auto-encoder will be:
\begin{align}
        E_\sigma(\mathbf{x}|\mathbf{y}) = \int & (1+\exp{(-W^H(W^X\mathbf{x})\odot(W^Y\mathbf{y})-\mathbf{b})})^{-1} d\mathbf{u}
                                                - \frac{\mathbf{x}^2}{2}+ \mathbf{ax}+ \text{const}\nonumber\\
        =  \sum_k &\log{( 1+\exp{(-W^H_{k \cdot}(W^X\mathbf{x}) \odot (W^Y\mathbf{y})-b_k)})} 
                  - \frac{\mathbf{x}^2}{2} + \mathbf{ax}+ \text{const} .\ \nonumber
\end{align}

Now consider the free energy of a Factored Gated Conditional 
Restricted Boltzmann Machine (FCRBM).

The energy function of a FCRBM with Gaussian visible units and Bernoulli hidden units is defined by
\begin{equation}
    E(\mathbf{x},\mathbf{h}|\mathbf{y}) = \frac{(\mathbf{a}-\mathbf{x})^2}{2\mathbf{\sigma}^2}
    - \mathbf{bh} - \sum_f W^X_{f\cdot}\mathbf{x} \odot W^Y_{f\cdot}\mathbf{y} \odot W^H_{f\cdot}\mathbf{h} .\ \nonumber
\end{equation}
Given $\mathbf{y}$, the conditional probability density assigned by the FCRBM to data point $\mathbf{x}$ is
\begin{align}
    p(\mathbf{x}|\mathbf{y}) & = \frac{\sum_\mathbf{h}\exp{-(E(\mathbf{x},\mathbf{h}|\mathbf{y}))}}{Z(\mathbf{y})} = \frac{\exp{(-F(\mathbf{x}|\mathbf{y}))}}{Z(\mathbf{y})}\nonumber\\
    - F(\mathbf{x}|\mathbf{y}) & = \log{\left(\sum_\mathbf{h} \exp\left(-E(\mathbf{x},\mathbf{h}|\mathbf{y})\right)\right)}\nonumber
\end{align}
where $Z(\mathbf{y}) = \sum_\mathbf{x,h} \exp \left(E(\mathbf{x},\mathbf{h}|\mathbf{y})\right)$ is the partition function
and $F(\mathbf{x}|\mathbf{y})$ is the free energy function. Expanding the free energy
function, we get
\begin{align}
    - F(\mathbf{x}|\mathbf{y}) =& \log{\sum_\mathbf{h} \exp\left(-E(\mathbf{x},\mathbf{h}|\mathbf{y})\right)}\nonumber\\
                             =& \log\sum_\mathbf{h} \exp\left(\frac{-(\mathbf{a}-\mathbf{x})^2}{2\mathbf{\sigma}^2} + \mathbf{bh} 
                               + \sum_f W^X_{f\cdot} \mathbf{x}\odot W^Y_{f\cdot}\mathbf{y} \odot W^H_{f\cdot}\mathbf{h}\right)\nonumber\\
                               =& - \frac{(\mathbf{a}-\mathbf{x})^2}{2\mathbf{\sigma}^2} + \log\left(\sum_\mathbf{h} \exp \left(\mathbf{bh} 
                           + \sum_f W^X_{f\cdot}\mathbf{x} \odot W^Y_{f\cdot}\mathbf{y} \odot W^H_{f\cdot}\mathbf{h}\right)\right)\nonumber\\
                                =&  - \frac{(\mathbf{a}-\mathbf{x})^2}{2\mathbf{\sigma}^2} 
                                    + \log\left(\sum_\mathbf{h} \prod_k \exp \left(b_kh_k 
                                    + \sum_f (W^X_{f\cdot} \mathbf{x}\odot W^Y_{f\cdot}\mathbf{y}) \odot W^H_{fk}h_k \right)\right)\nonumber\\
                                =& - \frac{(\mathbf{a}-\mathbf{x})^2}{2\mathbf{\sigma}^2} + \sum_k \log \left( 1 + \exp\left( b_k + 
                                \sum_f \left((W^H_{fk})^T(W^X\mathbf{x} \odot W^Y\mathbf{y})\right)\right)\right) .\ \nonumber
\end{align}
    Note that we can center the data by subtracting mean of $\mathbf{x}$ and dividing by its standard deviation, and therefore assume that $\sigma^2=1$. Substituting, we have
\begin{align}
    - F(\mathbf{x}|\mathbf{y})=& -\frac{(\mathbf{a}-\mathbf{x})^2}{2} +  \sum_k \log \left(1 +
\exp\left(- b_k - \sum_f(W^H_{fk})^T(W^X\mathbf{x} \odot W^Y\mathbf{y}) \right)\right)\nonumber\\
                               =&  \sum_k \log \left( 1 + \exp\left( b_k 
                + \sum_f(W^H_{fk})^T(W^X\mathbf{x} \odot W^Y\mathbf{y}) \right)\right)
                 - \mathbf{a}^2+\mathbf{ax}-\frac{\mathbf{x}^2}{2}\nonumber\\
                 =& \sum_k \log \left( 1 + \exp\left( b_k 
             + \sum_f(W^H_{fk })^T(W^X\mathbf{x} \odot W^Y\mathbf{y})\right)\right)
                  +\mathbf{ax}-\frac{\mathbf{x}^2}{2}+\text{const}\nonumber
\end{align}
Letting $W^H = (W^H)^T$, we get 
\begin{equation}
    = \sum_k \log \left( 1 + \exp \left( b_k + \sum_fW^H_{k f }(W^X\mathbf{x} \odot W^Y\mathbf{y})\right)\right)
                  +\mathbf{ax}-\frac{\mathbf{x}^2}{2}+\text{const}\nonumber
\end{equation}
Hence, the Conditional Gated Auto-encoder and the FCRBM are equal up to a constant.

\subsection{Gated Auto-encoder and mean-covariance Restricted Boltzmann Machines} \label{App:AppendixB.2}
\begin{theorem}
    Consider a covariance auto-encoder with an encoder and decoder, 
    \begin{align*}
        h(\mathbf{x},\mathbf{x}) &= h(W^H((W^F\mathbf{x})^2) + \mathbf{b})\nonumber\\
        r(\mathbf{x}|\mathbf{y}=\mathbf{x}, h) &= (W^F)^T(W^F\mathbf{y} \odot (W^H)^Th(\mathbf{x},\mathbf{y}))  + \mathbf{a},\nonumber
    \end{align*}
    where $\theta = \lbrace W^F, W^H, \mathbf{a}, \mathbf{b}\rbrace$ are the parameters of the model. 
    Moreover, consider a covariance Restricted Boltzmann Machine with Gaussian distribution
    over the visibles and Bernoulli distribution over the hiddens, such that its energy function
    is defined by 
    \begin{equation*}
        E^c(\mathbf{x},\mathbf{h}) = \frac{(\mathbf{a}-\mathbf{x})^2}{\sigma^2}- \sum_f P\mathbf{h} (C\mathbf{x})^2 -\mathbf{bh} ,\
    \end{equation*} 
    \noindent where $\theta = \lbrace P, C, \mathbf{a},
    \mathbf{b}\rbrace$ are its parameters.
    Then the energy function for a covariance Auto-encoder with dynamics
    $r(\mathbf{x}|\mathbf{y}) - \mathbf{x}$ is equivalent to the free
    energy of a covariance Restricted Boltzmann Machine. 
    The energy function of the covariance Auto-encoder is 
    \begin{equation}
        E(\mathbf{x},\mathbf{x}) = \sum_k \log (1+\exp(W^H(W^F\mathbf{x})^2+\mathbf{b})) - \mathbf{x}^2+\text{const}
    \end{equation}
\end{theorem}
\begin{proof}
    Note that the covariance auto-encoder is the same as a regular Gated Auto-encoder, 
    but setting $\mathbf{y} = \mathbf{x}$ and making the factor loading matrices the same, i.e.~$W^F=W^Y=W^X$.
    Then applying the general energy equation for GAE, Equation
    \ref{exgy2}, to the covariance auto-encoder,
    we get
    \begin{align}
        E(\mathbf{x},\mathbf{x}) =& \int h(\mathbf{u}) d\mathbf{u} - \frac{1}{2}\mathbf{x}^2 + \text{const}\nonumber\\
            =& \sum_k \log (1+\exp(W^H(W^F\mathbf{x})^2+\mathbf{b})) 
            - \mathbf{x}^2+\mathbf{ax}+\text{const} ,\
    \end{align}
    where $\mathbf{u} = W^H(W^F\mathbf{x})^2+\mathbf{b}$.

    Now consider the free energy of the mean-covariance 
    Restricted Boltzmann Machine (mcRBM) with Gaussian distribution
    over the visible units and Bernoulli distribution over the hidden units:
    \begin{align}
        - F(\mathbf{x}|\mathbf{y}) =& \log{\sum_\mathbf{h} \exp\left(-E(\mathbf{x},\mathbf{h}|\mathbf{y})\right)}\nonumber\\
        =& \log\sum_h \exp \left(-\frac{(\mathbf{a}-\mathbf{x})^2}{\sigma^2}+ (P\mathbf{h})(C\mathbf{x})^2 +\mathbf{bh}\right)\nonumber\\
        =& \log\sum_h \prod_k\exp \left(-\frac{(\mathbf{a}-\mathbf{x})^2}{\sigma^2}+ \sum_f(P_{fk}h_k)(C\mathbf{x})^2 + b_k h_k\right)\nonumber\\
        =& \sum_k \log \left(1+\exp\left(\sum_f(P_{fk}h_k)(C\mathbf{x})^2\right)\right) - \frac{(\mathbf{a}-\mathbf{x})^2}{\sigma^2} .\ \nonumber
    \end{align}
    As before, we can centre the data by subtracting mean of $\mathbf{x}$ and dividing by its standard deviation, and therefore assume that $\sigma^2=1$. Substituting, we have
    \begin{equation}
        =  \sum_k \log \left(1+\exp\left(\sum_f(P_{fk}h_k)(C\mathbf{x})^2\right)\right)-(\mathbf{a}-\mathbf{x})^2 .\
    \end{equation}
    Letting $W^H = P^T$ and $W^F=C$, we get 
    \begin{equation}
        =  \sum_k \log \left(1+\exp\left(\sum_f(P_{fk}h_k)(C\mathbf{x})^2\right)\right) - \mathbf{x}^2 + \mathbf{ax} + \text{const} .\
    \end{equation}
    Therefore, the two equations are equivalent.
\end{proof}
\fi

\appendix
\section{Gated Auto-encoder Scoring} \label{App:AppendixA}

\subsection{Vector field representation} \label{App:poincare}
To check that the vector
field can be written as the derivative of a scalar field,
we can submit to Poincar\'e's
integrability criterion: For some open, simple connected set
$\mathcal{U}$, a continuously differentiable function $F:\mathcal{U}
\rightarrow \Re^m$ defines a gradient field if and only if
\begin{align}
    \frac{\partial F_i(\mathbf{y})}{\partial y_j} = 
    \frac{\partial F_j(\mathbf{y})}{\partial y_i}, \text{ } \forall i,j=1 \cdots n.\nonumber
\end{align}
Considering the GAE, note that $i^{th}$ component of the decoder
$r_i(\mathbf{y}|\mathbf{x})$ can be rewritten as
\begin{align*}
    r_i(\mathbf{y}|\mathbf{x}) &= (W^Y_{\cdot i})^T( W^X\mathbf{x} \odot (W^H)^Th(\mathbf{y},\mathbf{x})) 
                               = (W^Y_{\cdot i} \odot
                               W^X\mathbf{x})^T
                               (W^H)^Th(\mathbf{y},\mathbf{x}) .\
\end{align*}
The derivatives of $r_i(\mathbf{y}|\mathbf{x}) - y_i$ with respect to $y_j$ are
\begin{align}
    \frac{\partial r_i(\mathbf{y}|\mathbf{x})}{\partial y_j} =&
    (W^Y_{\cdot i} \odot W^X\mathbf{x} )^T (W^H)^T \frac{\partial
      h(\mathbf{x},\mathbf{y})}{\partial y_j}
    = \frac{\partial r_j(\mathbf{y}|\mathbf{x})}{\partial y_i}
    \nonumber \label{poincare1}\\
    \frac{\partial h(\mathbf{y},\mathbf{x})}{\partial y_j} =&
    \frac{\partial h(\mathbf{u})}{\partial \mathbf{u}} W^H (W^Y_{\cdot j} \odot W^X\mathbf{x})
\end{align}
\noindent where $\mathbf{u}=   W^H((W^Y\mathbf{y}) \odot (W^X\mathbf{x}))$. 
By substituting Equation \ref{poincare1} into $\frac{\partial
  F_i}{\partial y_j}, \frac{\partial F_j}{\partial y_i}$, we have
\begin{align*}
    \frac{\partial F_i}{\partial y_j} \!  &= \! 
    \frac{\partial r_i(\mathbf{y}|\mathbf{x})}{\partial y_j} 
    \! - \! \delta_{ij}
    \! = \!
    \frac{\partial r_j(\mathbf{y}|\mathbf{x})}{\partial y_i} 
    \! - \! \delta_{ij}
    \! =  \! \frac{\partial F_j}{\partial y_i}
\end{align*}
\noindent where $\delta_{ij}=1$ for $i=j$ and $0$ for $i\neq j$.
Similarly, the derivatives of $r_i(\mathbf{y}|\mathbf{x}) - y_i$ with respect to $x_j$ are
\begin{align}
    \frac{\partial r_i(\mathbf{y}|\mathbf{x})}{\partial x_j} =&
    (W^Y_{\cdot i} \odot W^X_{\cdot j})^T(W^H)^Th(\mathbf{x},\mathbf{y}) 
    + (W^Y_{\cdot i}\odot W^X\mathbf{x})(W^H)^T \frac{\partial
      h}{\partial x_j} 
    = \frac{\partial r_j(\mathbf{y}|\mathbf{x})}{\partial
      x_i}    \label{poincare2} \nonumber ,\ \\
    \frac{\partial h(\mathbf{y},\mathbf{x})}{\partial x_j} =&
    \frac{\partial h(\mathbf{u})}{\partial \mathbf{u}} W^H (W^Y_{\cdot j} \odot W^X\mathbf{x}) .\
\end{align}
By substituting Equation \ref{poincare2} into $\frac{\partial F_i}{\partial x_j}, \frac{\partial F_j}{\partial x_i}$, this yields
\begin{align*}
    \! \frac{\partial F_i}{\partial x_j} \! &= \! 
    \frac{\partial r_i(\mathbf{x}|\mathbf{y})}{\partial x_j}
    \! = \!
    \frac{\partial r_j(\mathbf{x}|\mathbf{y})}{\partial x_i}
    \! = \! \frac{\partial F_j}{\partial x_i} .\ \nonumber 
\end{align*}

\subsection{Deriving an Energy Function} \label{App:AppendixA.1}

Integrating out the GAE's trajectory, we have 
\begin{align}
    E(\mathbf{y}|\mathbf{x}) =& \int_\mathcal{C} (r(\mathbf{y}|\mathbf{x})- \mathbf{y}) d\mathbf{y} \nonumber \\
        =& \int W^Y\left( \left(W^X\mathbf{x}) \odot W^Hh(\mathbf{u}\right)\right) d\mathbf{y} - \int  \mathbf{y} d\mathbf{y}\nonumber\\
    =& W^Y \left( \left(W^X\mathbf{x}\right) \odot  W^H\int h\left(\mathbf{u}\right) d\mathbf{u}\right) - \int \mathbf{y} d\mathbf{y}
    \label{exgypath},
\end{align}
where $\mathbf{u}$ is an auxiliary variable such that $\mathbf{u} = W^H((W^Y\mathbf{y}) \odot (W^X\mathbf{x})) $ and 
    $\frac{d\mathbf{u}}{d\mathbf{y}} = W^H(W^Y \odot (W^X\mathbf{x} \otimes \mathbf{1}_D ))$, 
where $\otimes$ is the Kronecker product. 
Consider the symmetric objective function, which is defined in Equation \ref{symobj}. Then we have to also
consider the vector field system where both symmetric cases $\mathbf{x}|\mathbf{y}$ and $\mathbf{y}|\mathbf{x}$ 
are valid. As mentioned in Section 3.1, let $\xi=[\mathbf{x};\mathbf{y}]$ and $\gamma=[\mathbf{y};\mathbf{x}]$.
As well, let $W^\xi = \diag (W^X, W^Y)$ and $W^\gamma = \diag (W^Y, W^X)$ where they are block diagonal matrices.
Consequently, the vector field becomes 
\begin{equation}
    F(\boldsymbol{\xi}|\boldsymbol{\gamma}) = r(\boldsymbol{\xi}|\boldsymbol{\gamma}) - \boldsymbol{\xi} ,\
\end{equation}
and the energy function becomes
\begin{align}
    E(\boldsymbol{\xi}|\boldsymbol{\gamma}) =& \int (r(\boldsymbol{\xi}|\boldsymbol{\gamma})- \boldsymbol{\xi}) d\boldsymbol{\xi} \nonumber\\
    =& \int (W^\xi)^T ( (W^\gamma\boldsymbol{\gamma}) \odot (W^H)^Th(\boldsymbol{u}))d\boldsymbol{\xi} - \int  \boldsymbol{\xi} d\boldsymbol{\xi}\nonumber\\
    =& (W^\xi)^T ( (W^\gamma\boldsymbol{\gamma}) \odot  (W^H)^T\int h(\boldsymbol{u}) d\boldsymbol{u}) - \int \boldsymbol{\xi} d\boldsymbol{\xi}\nonumber
\end{align}
where $\boldsymbol{u}$ is an auxiliary variable such that $\boldsymbol{u} = W^H \left((W^\xi\boldsymbol{\xi}) \odot (W^\gamma\boldsymbol{\gamma})\right)$. Then
\begin{equation}
    \frac{d \boldsymbol{u}}{d \boldsymbol{\xi}} = W^H\left(W^\xi \odot (W^\gamma\boldsymbol{\gamma} \otimes \boldsymbol{1}_D )\right) .\ \nonumber
\end{equation} 
Moreover, note that the decoder can be re-formulated as
\begin{align}
    r(\boldsymbol{\xi}|\boldsymbol{\gamma}) &= (W^\xi)^T( W^\gamma\boldsymbol{\gamma} \odot (W^H)^Th(\boldsymbol{\xi},\boldsymbol{\gamma})) \nonumber\\
                                            &= \left((W^\xi)^T \odot  (W^\gamma\boldsymbol{\gamma} \otimes \boldsymbol{1}_D)\right) (W^H)^Th(\boldsymbol{\xi},\boldsymbol{\gamma}) .\ \nonumber
\end{align}
Re-writing the first term of Equation \ref{exgypath} in terms of the auxiliary variable $\boldsymbol{u}$, the energy reduces to 
\begin{align}
    E(\boldsymbol{\xi}|\boldsymbol{\gamma}) =& \left((W^\xi)^T \odot  (W^\gamma\boldsymbol{\gamma} \otimes \boldsymbol{1}_D)\right) (W^H)^T \int h(\boldsymbol{u}) 
    \left(W^H(W^\xi \odot (W^\gamma\boldsymbol{\gamma} \otimes \boldsymbol{1}_D ))\right)^{-1}d\boldsymbol{u} - \int \boldsymbol{\xi} d\boldsymbol{\xi}\nonumber\\
    =& \left((W^\xi)^T \odot  (W^\gamma\boldsymbol{\gamma} \otimes \boldsymbol{1}_D)\right) (W^H)^T \left(
(W^\xi \odot (W^\gamma\boldsymbol{\gamma} \otimes \boldsymbol{1}_D ))W^H\right)^{-T}\int h(\boldsymbol{u}) d\boldsymbol{u}- \int \boldsymbol{\xi} d\boldsymbol{\xi}\nonumber\\
        =&\int h(\boldsymbol{u}) d\boldsymbol{u}    - \int \boldsymbol{\xi} d\boldsymbol{\xi}\nonumber\\
        = & \int h(\boldsymbol{u}) d\boldsymbol{u} -
        \frac{1}{2}\boldsymbol{\xi}^2 + \text{const} .\ \nonumber
\end{align}
\section{Relation to other types of Restricted Boltzmann Machines}\label{App:AppendixB}
\subsection{Gated Auto-encoder and Factored Gated Conditional Restricted Boltzmann Machines} \label{App:AppendixB.1}

Suppose that the hidden activation function is a sigmoid. Moreover, we define our Gated Auto-encoder
to consists of an encoder $h(\cdot)$ and decoder $r(\cdot)$ such that
\begin{align}
    h(\mathbf{x},\mathbf{y}) &= h(W^H ((W^X\mathbf{x}) \odot (W^Y\mathbf{y})) + \mathbf{b})\nonumber\\
    r(\mathbf{x}|\mathbf{y}, h) &= (W^X)^T ( (W^Y\mathbf{y}) \odot (W^H)^Th(\mathbf{x},\mathbf{y})) + \mathbf{a} ,\ \nonumber
\end{align}
where $\theta = \lbrace W^H, W^X, W^Y, \mathbf{b} \rbrace$ is the parameters of the model.
Note that the weights are not tied in this case.
The energy function for the Gated Auto-encoder will be:
\begin{align}
        E_\sigma(\mathbf{x}|\mathbf{y}) = \int & (1+\exp{(-W^H(W^X\mathbf{x})\odot(W^Y\mathbf{y})-\mathbf{b})})^{-1} d\mathbf{u}
                                                - \frac{\mathbf{x}^2}{2}+ \mathbf{ax}+ \text{const}\nonumber\\
        =  \sum_k &\log{( 1+\exp{(-W^H_{k \cdot}(W^X\mathbf{x}) \odot (W^Y\mathbf{y})-b_k)})} 
                  - \frac{\mathbf{x}^2}{2} + \mathbf{ax}+ \text{const} .\ \nonumber
\end{align}

Now consider the free energy of a Factored Gated Conditional 
Restricted Boltzmann Machine (FCRBM).

The energy function of a FCRBM with Gaussian visible units and Bernoulli hidden units is defined by
\begin{equation}
    E(\mathbf{x},\mathbf{h}|\mathbf{y}) = \frac{(\mathbf{a}-\mathbf{x})^2}{2\mathbf{\sigma}^2}
    - \mathbf{bh} - \sum_f W^X_{f\cdot}\mathbf{x} \odot W^Y_{f\cdot}\mathbf{y} \odot W^H_{f\cdot}\mathbf{h} .\ \nonumber
\end{equation}
Given $\mathbf{y}$, the conditional probability density assigned by the FCRBM to data point $\mathbf{x}$ is
\begin{align}
    p(\mathbf{x}|\mathbf{y}) & = \frac{\sum_\mathbf{h}\exp{-(E(\mathbf{x},\mathbf{h}|\mathbf{y}))}}{Z(\mathbf{y})} = \frac{\exp{(-F(\mathbf{x}|\mathbf{y}))}}{Z(\mathbf{y})}\nonumber\\
    - F(\mathbf{x}|\mathbf{y}) & = \log{\left(\sum_\mathbf{h} \exp\left(-E(\mathbf{x},\mathbf{h}|\mathbf{y})\right)\right)}\nonumber
\end{align}
where $Z(\mathbf{y}) = \sum_\mathbf{x,h} \exp \left(E(\mathbf{x},\mathbf{h}|\mathbf{y})\right)$ is the partition function
and $F(\mathbf{x}|\mathbf{y})$ is the free energy function. Expanding the free energy
function, we get
\begin{align}
    - F(\mathbf{x}|\mathbf{y}) =& \log{\sum_\mathbf{h} \exp\left(-E(\mathbf{x},\mathbf{h}|\mathbf{y})\right)}\nonumber\\
                             =& \log\sum_\mathbf{h} \exp\left(\frac{-(\mathbf{a}-\mathbf{x})^2}{2\mathbf{\sigma}^2} + \mathbf{bh} 
                               + \sum_f W^X_{f\cdot} \mathbf{x}\odot W^Y_{f\cdot}\mathbf{y} \odot W^H_{f\cdot}\mathbf{h}\right)\nonumber\\
                               =& - \frac{(\mathbf{a}-\mathbf{x})^2}{2\mathbf{\sigma}^2} + \log\left(\sum_\mathbf{h} \exp \left(\mathbf{bh} 
                           + \sum_f W^X_{f\cdot}\mathbf{x} \odot W^Y_{f\cdot}\mathbf{y} \odot W^H_{f\cdot}\mathbf{h}\right)\right)\nonumber\\
                                =&  - \frac{(\mathbf{a}-\mathbf{x})^2}{2\mathbf{\sigma}^2} 
                                    + \log\left(\sum_\mathbf{h} \prod_k \exp \left(b_kh_k 
                                    + \sum_f (W^X_{f\cdot} \mathbf{x}\odot W^Y_{f\cdot}\mathbf{y}) \odot W^H_{fk}h_k \right)\right)\nonumber\\
                                =& - \frac{(\mathbf{a}-\mathbf{x})^2}{2\mathbf{\sigma}^2} + \sum_k \log \left( 1 + \exp\left( b_k + 
                                \sum_f \left((W^H_{fk})^T(W^X\mathbf{x} \odot W^Y\mathbf{y})\right)\right)\right) .\ \nonumber
\end{align}
    Note that we can center the data by subtracting mean of $\mathbf{x}$ and dividing by its standard deviation, and therefore assume that $\sigma^2=1$. Substituting, we have
\begin{align}
    - F(\mathbf{x}|\mathbf{y})=& -\frac{(\mathbf{a}-\mathbf{x})^2}{2} +  \sum_k \log \left(1 +
\exp\left(- b_k - \sum_f(W^H_{fk})^T(W^X\mathbf{x} \odot W^Y\mathbf{y}) \right)\right)\nonumber\\
                               =&  \sum_k \log \left( 1 + \exp\left( b_k 
                + \sum_f(W^H_{fk})^T(W^X\mathbf{x} \odot W^Y\mathbf{y}) \right)\right)
                 - \mathbf{a}^2+\mathbf{ax}-\frac{\mathbf{x}^2}{2}\nonumber\\
                 =& \sum_k \log \left( 1 + \exp\left( b_k 
             + \sum_f(W^H_{fk })^T(W^X\mathbf{x} \odot W^Y\mathbf{y})\right)\right)
                  +\mathbf{ax}-\frac{\mathbf{x}^2}{2}+\text{const}\nonumber
\end{align}
Letting $W^H = (W^H)^T$, we get 
\begin{equation}
    = \sum_k \log \left( 1 + \exp \left( b_k + \sum_fW^H_{k f }(W^X\mathbf{x} \odot W^Y\mathbf{y})\right)\right)
                  +\mathbf{ax}-\frac{\mathbf{x}^2}{2}+\text{const}\nonumber
\end{equation}
Hence, the Conditional Gated Auto-encoder and the FCRBM are equal up to a constant.

\subsection{Gated Auto-encoder and mean-covariance Restricted Boltzmann Machines} \label{App:AppendixB.2}
\begin{theorem}
    Consider a covariance auto-encoder with an encoder and decoder, 
    \begin{align*}
        h(\mathbf{x},\mathbf{x}) &= h(W^H((W^F\mathbf{x})^2) + \mathbf{b})\nonumber\\
        r(\mathbf{x}|\mathbf{y}=\mathbf{x}, h) &= (W^F)^T(W^F\mathbf{y} \odot (W^H)^Th(\mathbf{x},\mathbf{y}))  + \mathbf{a},\nonumber
    \end{align*}
    where $\theta = \lbrace W^F, W^H, \mathbf{a}, \mathbf{b}\rbrace$ are the parameters of the model. 
    Moreover, consider a covariance Restricted Boltzmann Machine with Gaussian distribution
    over the visibles and Bernoulli distribution over the hiddens, such that its energy function
    is defined by 
    \begin{equation*}
        E^c(\mathbf{x},\mathbf{h}) = \frac{(\mathbf{a}-\mathbf{x})^2}{\sigma^2}- \sum_f P\mathbf{h} (C\mathbf{x})^2 -\mathbf{bh} ,\
    \end{equation*} 
    \noindent where $\theta = \lbrace P, C, \mathbf{a},
    \mathbf{b}\rbrace$ are its parameters.
    Then the energy function for a covariance Auto-encoder with dynamics
    $r(\mathbf{x}|\mathbf{y}) - \mathbf{x}$ is equivalent to the free
    energy of a covariance Restricted Boltzmann Machine. 
    The energy function of the covariance Auto-encoder is 
    \begin{equation}
        E(\mathbf{x},\mathbf{x}) = \sum_k \log (1+\exp(W^H(W^F\mathbf{x})^2+\mathbf{b})) - \mathbf{x}^2+\text{const}
    \end{equation}
\end{theorem}
\begin{proof}
    Note that the covariance auto-encoder is the same as a regular Gated Auto-encoder, 
    but setting $\mathbf{y} = \mathbf{x}$ and making the factor loading matrices the same, i.e.~$W^F=W^Y=W^X$.
    Then applying the general energy equation for GAE, Equation
    \ref{exgy2}, to the covariance auto-encoder,
    we get
    \begin{align}
        E(\mathbf{x},\mathbf{x}) =& \int h(\mathbf{u}) d\mathbf{u} - \frac{1}{2}\mathbf{x}^2 + \text{const}\nonumber\\
            =& \sum_k \log (1+\exp(W^H(W^F\mathbf{x})^2+\mathbf{b})) 
            - \mathbf{x}^2+\mathbf{ax}+\text{const} ,\
    \end{align}
    where $\mathbf{u} = W^H(W^F\mathbf{x})^2+\mathbf{b}$.

    Now consider the free energy of the mean-covariance 
    Restricted Boltzmann Machine (mcRBM) with Gaussian distribution
    over the visible units and Bernoulli distribution over the hidden units:
    \begin{align}
        - F(\mathbf{x}|\mathbf{y}) =& \log{\sum_\mathbf{h} \exp\left(-E(\mathbf{x},\mathbf{h}|\mathbf{y})\right)}\nonumber\\
        =& \log\sum_h \exp \left(-\frac{(\mathbf{a}-\mathbf{x})^2}{\sigma^2}+ (P\mathbf{h})(C\mathbf{x})^2 +\mathbf{bh}\right)\nonumber\\
        =& \log\sum_h \prod_k\exp \left(-\frac{(\mathbf{a}-\mathbf{x})^2}{\sigma^2}+ \sum_f(P_{fk}h_k)(C\mathbf{x})^2 + b_k h_k\right)\nonumber\\
        =& \sum_k \log \left(1+\exp\left(\sum_f(P_{fk}h_k)(C\mathbf{x})^2\right)\right) - \frac{(\mathbf{a}-\mathbf{x})^2}{\sigma^2} .\ \nonumber
    \end{align}
    As before, we can center the data by subtracting mean of $\mathbf{x}$ and dividing by its standard deviation, and therefore assume that $\sigma^2=1$. Substituting, we have
    \begin{equation}
        =  \sum_k \log \left(1+\exp\left(\sum_f(P_{fk}h_k)(C\mathbf{x})^2\right)\right)-(\mathbf{a}-\mathbf{x})^2 .\
    \end{equation}
    Letting $W^H = P^T$ and $W^F=C$, we get 
    \begin{equation}
        =  \sum_k \log \left(1+\exp\left(\sum_f(P_{fk}h_k)(C\mathbf{x})^2\right)\right) - \mathbf{x}^2 + \mathbf{ax} + \text{const} .\
    \end{equation}
    Therefore, the two equations are equivalent.
\end{proof}

\end{document}